\newcommand{\xhdr}[1]{\vspace{0.0mm}\noindent{\textbf{#1.}}\hspace{0.5mm}}
\newtheorem{theorem}{Theorem}[section]
\newtheorem{lemma}[theorem]{Lemma}
\newcommand{\RR}{\mathbb{R}}
\def\A{\mathcal A}
\def\P{\mathcal P}
\def\tilde{\widetilde}
\def\ep{\varepsilon}
\def\algname{NHOLS}
\def\S{\mathcal S}
\def\L{\mathit \Delta}
\definecolor{mylinkcolor}{RGB}{0,0,140}
\begin{document}

\title{Nonlinear Higher-Order Label Spreading}

\author{%
  Francesco Tudisco%\thanks{Use footnote for providing further information
  %  about author (webpage, alternative address)---\emph{not} for acknowledging
   % funding agencies.} 
   \\
  School of Mathematics\\
  Gran Sasso Science Institute\\
  67100, L'Aquila Italy \\
  \texttt{francesco.tudisco@gssi.it} 
  % examples of more authors
   \And
   Austin R. Benson \\
   Department of Computer Science\\
   Cornell University\\
   Ithaca, NY 14853 \\
   \texttt{arb@cs.cornell.edu} 
   \And
   Konstantin Prokopchik \\
   School of Computer Science \\
   Gran Sasso Science Institute \\
   67100, L'Aquila Italy\\
   \texttt{konstantin.prokopchik@gssi.it} %\\
}

\maketitle

\begin{abstract}
Label spreading is a general technique for semi-supervised learning with point
cloud or network data, which can be interpreted as a diffusion of labels on a
graph. While there are many variants of label spreading, nearly all of them are
linear models, where the incoming information to a node is a weighted sum of
information from neighboring nodes. Here, we add nonlinearity to label spreading
through nonlinear functions of higher-order structure in the graph, namely
triangles in the graph. For a broad class of nonlinear functions, we prove
convergence of our nonlinear higher-order label spreading algorithm to the
global solution of a constrained semi-supervised loss function. We demonstrate
the efficiency and efficacy of our approach on a variety of point cloud and
network datasets, where the nonlinear higher-order model compares favorably to
classical label spreading, as well as hypergraph models and graph neural
networks.
\end{abstract}

\section{Introduction}

Label Spreading (LS) is a general algorithmic technique for Semi-Supervised
Learning (SSL), where one infers unknown labels from known labels by iteratively
diffusing or ``spreading'' the known labels over a similarity graph where nodes
correspond to data points and are connected by edges if they are
similar~\cite{zhu2009introduction}. With generic point cloud data, edges are
typically $k$-nearest-neighbors or
$\epsilon$-neighbors~\cite{joachims2003transductive,zhou2004learning,zhu2003semi},
but LS can also be used directly on relational data coming from, e.g., social
networks~\cite{chin2019decoupled}, web graphs~\cite{kyng2015algorithms}, or
co-purchases~\cite{gleich2015using}.  A canonical or ``standard'' implementation
of label spreading is the iterative local and global consistency
approach~\cite{zhou2004learning}. In this method, nodes iteratively spread their
current label to neighbors (encouraging local consistency), and originally
labeled points try to remain close to their initial labels (encouraging global
consistency). This procedure corresponds to a linear gradient-flow that
minimizes a regularized loss function in the limit.

The linearity of the diffusion makes standard LS simple to
implement. From the perspective of an unlabeled data point, the corresponding
node in the graph iteratively updates its label based on a fixed \emph{linear}
combination (the mean) of the current labels of its neighbors in the graph.  The
linearity also makes it easy to analyze the limiting behavior of this iterative
process, which coincides with the solution of a Laplacian linear system. At the
same time, nonlinear methods provide stronger modeling capabilities, making them
a hallmark of deep learning~\cite{Goodfellow-et-al-2016} and kernel
methods~\cite{hofmann2008kernel}.

Here, we incorporate nonlinearity into label spreading via nonlinear functions
on so-called ``higher-order'' relationships between the data points, i.e.,
information about groups of nodes instead of just the similarities encoded by
edges in a graph. More specifically, we use a similarity \emph{hypergraph} that
encodes higher-order relationships. This hypergraph can come directly from data
or be derived from a similarity graph (we often use the hypergraph induced by
the 3-cliques in a similarity graph). From this, we devise a new spreading
process, where a given node $u$ updates its label based on the labels of the
other nodes in the hyperedges that contain $u$. Importantly, we allow the
spreading of information at a hyperedge to be any of a broad class of nonlinear
``mixing functions.''

We call our approach Nonlinear Higher-Order Label Spreading (\algname) since it
uses both nonlinear and higher-order information. Even though the nonlinearity
of our spreading process makes analysis more challenging, we show that
{\algname} enjoys several nice properties similar to standard LS.
First, {\algname} minimizes a regularized loss function which
combines a global consistency term with a higher-order local consistency term.
Second, for a broad class of nonlinear mixing functions, {\algname}
globally converges to a unique global minimizer of this loss function.
Furthermore, in terms of implementation, {\algname} shares the same simplicity
and efficiency as standard LS. Each iteration only requires a single pass over the
input data, making it highly scalable.

We evaluate {\algname} on a number of synthetic and real-world datasets,
comparing against standard LS, hypergraph semi-supervised learning methods, and
graph neural networks. We find that incorporating nonlinearities of
higher-order information into label spreading almost always achieves
the best performance, while also being nearly as fast as standard LS.

\subsection{Related work}

\xhdr{Higher-order information for graph data} A key idea in many recent
graph-based learning methods is that incorporating \emph{higher-order}
interactions involving multiple nodes can make large changes in
performance. This has yielded improvements in numerous settings, including
unsupervised clustering~\cite{benson2016higher,li2017inhomogeneous,tsourakakis2017scalable},
localized clustering~\cite{li2019optimizing,yin2017local},
representation learning~\cite{rossi2018higher},
link prediction~\cite{arrigo2020framework,benson2018simplicial,rossi2019higher},
graph classification~\cite{ahmed2017graphlet},
ranking~\cite{arrigo2020framework,arrigo2019multi,benson2019three,chitra2019random}, and
data visualization~\cite{nassar2020using}.
A higher-order version of label spreading has also recently been developed for
relational data~\cite{eswaran2020higher}, and this correspond to the special
case of linear mixing functions within our framework.

\xhdr{Hypergraph learning}
There are many machine learning methods for hypergraph data, and a standard
approach  is to first reduce the hypergraph to a graph upon which a
graph-based method can be
employed~\cite{agarwal2006higher,feng2019hypergraph,li2017inhomogeneous,satchidanand2015extended,zhang2018beyond,zhou2007hypergraph}.
These techniques are ``clique expansions'', as they place a (possibly weighted)
clique in the graph for every hyperedge. Using a linear mixing function in our
framework is a clique expansion technique, which we cover in
Section~\ref{sec:nhols}. Our analysis is focused on nonlinear mixing functions,
which is not a clique expansion. Thus, our framework is conceptually closer to
hypergraph methods that avoid clique expansions, such as those based on
nonlinear Hypergraph Laplacian
operators~\cite{chan2018spectral,li2018submodular,yadati2019hypergcn} or
generalized splitting functions~\cite{veldt2020hypergraph,veldt2020localized}.

\xhdr{Nonlinear semi-supervised learning}
There are variants of label spreading techniques that use nonlinearities, such
as $p$-Laplacians~\cite{alamgir2011phase,bridle2013p,ibrahim2019nonlinear}
and their limits~\cite{kyng2015algorithms}. Our theoretical framework provides
new convergence guarantees for some of these approaches.

\xhdr{Tensor methods}
Tensors can also represent higher-order data and are broadly used in
machine learning~\cite{anandkumar2014tensor,jaffe2018learning,nguyen2016efficient,sidiropoulos2017tensor,wu2016general}.
We analyze the iterations of \algname{} as a type of tensor contraction;
from this, we extend recent nonlinear Perron-Frobenius theory~\cite{gautier2019contractivity,gautier2019perron} to establish 
convergence results.

\section{Background on Standard Label Spreading}\label{sec:LS}
We first review a ``standard'' LS technique that is essentially the same
as the one of Zhou et al.~\cite{zhou2004learning} so that we can later draw parallels with our new
nonlinear higher-order method. Let $G=(V,E,\omega)$ be a weighted undirected graph
with nodes $V=\{1,\dots, n\}$, edge set $E \subseteq V\times V$ and edge-weight
function $\omega(ij)>0$.
As mentioned in the introduction, $G$ typically represents either a similarity graph for a point cloud
or a bona fide relational network.
Let $A$ be the adjacency matrix of $G$, i.e., by
$A_{ij}=\omega(ij)$ if $ij\in E$ and $A_{ij}=0$ otherwise.
Furthermore, let $D_G = \mathrm{Diag}(d_1, \dots, d_n)$ be the diagonal degree matrix of $G$,
where $d_i = \sum_j A_{ij}$. Throughout this paper, we will assume that $G$ has no isolated nodes, 
that is $D_G$ has no zero diagonal entries.
Finally, let $S = D_G^{-1/2}AD_G^{-1/2}$ be the normalized adjacency matrix.

Our goal is to provide a label in $\{1,\dots,L\}$ to each node, and we know the label
of (usually a small) subset of the nodes. The initial labels are represented by membership vectors
in an $n \times c$ matrix $Y$, where $Y_{i,\ell} = 1$ if node $i$ has initial label $\ell$
and $Y_{i,\ell} = 0$ otherwise.
Given an initial guess $F^{(0)}\in \RR^{n \times c}$, the label spreading algorithm iteratively computes
\begin{equation}\label{eq:LS}
 F^{(r+1)} = \beta SF^{(r)} + \gamma Y\, \quad r=0,1,2,\dots,
\end{equation}
with $\beta, \gamma \geq 0$ and $\beta+\gamma = 1$.
The iterates converge to the solution of the linear system $(I- \beta S)F^* = \gamma Y$,
but in practice, a few iterations of \eqref{eq:LS} with the initial point $F^{(0)} = Y$ suffices~\cite{fujiwara2014efficient}.
This yields an approximate solution $\tilde{F}^*$. 
The prediction on an unlabeled node $j$ is then $\arg\max_{\ell} \tilde{F}^*_{j,\ell}$
Also, in practice, one can perform this iteration column-wise with one initial vector $y$ per label class. 

The label spreading procedure \eqref{eq:LS} can be interpreted as
gradient descent applied to a quadratic regularized loss function and as a
discrete dynamical system that spreads the initial value condition $F^{(0)} = Y$
through the graph via a linear gradient flow. We briefly review these two
analogous formulations. Let $\psi$ be the quadratic energy loss function
that is separable on the columns of $F$:
\begin{equation}\label{eq:LS_loss}
 \psi(F) = \sum_{\ell=1}^{c} \psi_{\ell}(F_{:,\ell}) = \sum_{\ell=1}^{c} \frac 1 2 \big\{\|F_{:,\ell} -Y_{:,\ell}\|_2^2 + \lambda \,  F_{:,\ell}^\top \L F_{:,\ell} \big\},
\end{equation}
where $\L = I - D_G^{-1/2}AD_G^{-1/2}=I-S$ is the normalized Laplacian.
Consider the the dynamical system 
\begin{equation}\label{eq:ODE}
\dot f(t) = -\nabla \psi_{\ell} (f(t))
\end{equation}
for any $\ell \in \{1,\ldots,L\}$.
Since $\psi_{\ell}$ is convex, $\lim_{t\to\infty}f(t)=f^*$ such that
$\psi_\ell(f^*) = \min \psi_\ell(f)$. Label spreading coincides with gradient descent
applied to \eqref{eq:LS_loss} or, equivalently, with   explicit Euler integration
applied to \eqref{eq:ODE}, for a particular value of the step length $h$.
In fact,
\begin{equation}
f - h \nabla \psi_{\ell} (f) = f - h(f - Y_{:,\ell} + \lambda \L f) =(1- h - h\lambda) f + h\lambda Sf + h Y_{:,\ell},
\end{equation}
which, for $(1 - h) / h = \lambda$, coincides with one iteration of \eqref{eq:LS} applied to the $\ell$-th column of $F$. 
Moreover, as $F^{(r)} \geq 0$ for all $r$,
this gradient flow interpretation shows that the global minimizer of \eqref{eq:LS_loss} is  nonnegative, i.e.,
$\min \psi(F) = \min \{\psi(F) : F\geq 0\}$.
In the next section, we use similar techniques to derive our nonlinear higher-order label spreading method.

\section{Nonlinear Higher-order Label Spreading}\label{sec:nhols}
Now we develop our nonlinear higher-order label spreading (NHOLS) technique.
We will assume that we have a $3$-regular hypergraph $H = (V, \mathcal E, \tau)$ capturing higher-order information on the same set of nodes as a
weighted graph $G = (V, E, \omega)$, where $\mathcal E \subseteq V\times V \times V$ and $\tau$ is a hyperedge weight function $\tau(ijk)>0$.
In our experiments, we will usually derive $H$ from $G$ by considering the hyperedges of $H$ to be the set
of triangles (i.e., 3-cliques) of $G$. However, in principle we could use any hypergraph. We also do not need
the associated graph $G$, but we keep it for greater generality and find it useful in practice.
Finally, we develop our methodology for $3$-regular hypergraphs for simplicity and notational sanity, but our ideas generalize to arbitrary hypergraphs (Theorem~\ref{thm:general-convergence}).

\subsection{Nonlinear Second-order Label Spreading with Mixing Functions}
We represent $H$ via the associated third-order adjacency tensor $\A$, defined by $\A_{ijk} = \tau(ijk)$ if $ijk\in \mathcal E$ and $\A_{ijk}=0$ otherwise. 
Analogous to the graph case, let $D_H=\mathrm{Diag}(\delta_1,\dots, \delta_n)$ be the diagonal matrix of the hypergraph node degrees, where
$\delta_i = \sum_{j,k :\,  ijk\in\mathcal E}\tau(ijk) = \sum_{jk}\A_{ijk}$.
Again, we assume that $H$ has no isolated nodes so that $\delta_i>0$ for all $i\in V$.

As noted in the introduction, we will make use of nonlinear \emph{mixing functions},
which we denote by $\sigma\colon\RR^2\to \RR$.
For a tensor $T=T_{ijk}$, we define the tensor map $T\sigma \colon \RR^n\to\RR^n$ entrywise:
\begin{equation}
T\sigma(f)_{i} = \sum_{jk}T_{ijk}\, \sigma(f_j,f_k). \label{eq:contraction}
\end{equation}
Hence, in analogy with the matrix case, we denote by $\S:\RR^n\to\RR^n$ the map 
$$
\S(f) = D_H^{-1/2} \, \A\sigma(D_H^{-1/2}f)
$$
This type of tensor contraction will serve as the basis for our NHOLS technique.

We need one additional piece of notation that is special to the higher-order case,
which is a type of energy function that will be used to normalize iterates in order to guarantee convergence.
Let $B$ be the matrix with entries $B_{ij} = \sum_{k}\A_{kij}$. Define $\varphi \colon \RR^n \to \RR$ by
\begin{equation}\label{eq:varphi}
\varphi(f)  = \frac 1 2 \sqrt{\,\sum_{ij}B_{ij}\,  \sigma(\, f_i/\sqrt{\delta_i}\, ,\, f_j/\sqrt{\delta_j}\, )^2 }.
\end{equation}

Finally, we arrive at our nonlinear higher-order label spreading ({\algname}) method.
Given an initial vector $f^{(0)}\in \RR^n$, we define the NHOLS iterates by
\begin{equation}\label{eq:NLS}
g^{(r)} = \alpha \S(f^{(r)}) + \beta Sf^{(r)} + \gamma y,\quad
f^{(r+1)} = g^{(r)} / \varphi(g^{(r)}),\quad
r=0,1,2,\dots,
\end{equation}
where $\alpha,\beta,\gamma \geq 0$, $\alpha+\beta+\gamma = 1$, and
$y$ is an initial label membership vector.
Provided that $\sigma$ is positive (i.e., $\sigma(a,b) > 0$ for any $a, b > 0$)
and the initial vector $f^{(0)}$ is nonnegative, then all iterates
are nonnegative. This assumption on the mixing function will be crucial
to prove the convergence of the iterates.
We perform this iteration with one initial vector per label class,
analogous to standard Label Spreading.
Algorithm~\ref{alg:N2LS} gives the overall procedure.

\begin{algorithm}[t]
  \DontPrintSemicolon
	\caption{NHOLS: Nonlinear Higher-Order Label Spreading}\label{alg:N2LS}
        \KwIn{Tensor $\A$; matrix $A$; mixing function $\sigma:\RR^2\to\RR$;
          label matrix $Y \in \{0,1\}^{n \times c}$;
          scalars $\alpha,\beta,\gamma \geq 0$ with $\alpha+\beta+\gamma=1$;
          smoothing parameter $0 < \ep < 1$; stopping tolerance $\textnormal{tol}$} 
	\KwOut{Predicted labels $\hat{Y} \in \{1,\dots, c\}^{n}$}
        $\tilde{F} \in \RR^{n \times c}$ \quad \texttt{\# Store approximate solutions} \;
        \For{$\ell=1,\dots,L$}{
          $y_{\ep} \gets (1-\ep)Y_{:,\ell} + \ep \mathbf{1}$\;
          $f^{(0)} \gets y_{\ep}$ \quad \texttt{\# Initialize with label smoothing}\;
    	 \Repeat{$\|f^{(r+1)}-f^{(r)}\|/\|f^{(r+1)}\| < \textnormal{tol}$}{
    	        $g \gets  \alpha \S(f^{(r)}) + \beta Sf^{(r)} + \gamma y_\ep$ \quad\phantom{xxx!!!!} \texttt{\# Following \eqref{eq:contraction} and \eqref{eq:NLS}}\;
    	        $f^{(r+1)} \gets g / \varphi(g)$ \quad \phantom{xxxxxxxxxxxxxx!!!!} \texttt{\# Following \eqref{eq:varphi} and \eqref{eq:NLS}}\;
    	 }
    	 $\tilde{F}_{:,\ell} \gets f^{(r+1)}$
	}
    \lFor{$i = 1,\ldots,n$}{$\hat{Y}_i = \arg\max_{\ell} \tilde{F}_{i,\ell}$}
\end{algorithm}

The parameters $\alpha,\beta,\gamma$ allow us to tune the contributes of the
first-order local consistency, second-order local consistency, and global
consistency. For $\beta = 0$ we obtain a purely second-order method, which can
be useful when we do not have access to first-order data (e.g., we only have a
hypergraph). The case of $\alpha=0$ reduces to a normalized version of the standard LS as in \eqref{eq:LS}.
Furthermore, we can compute the iteration in \eqref{eq:NLS} efficiently.  Each
iteration requires one matrix-vector product and one ``tensor-martix'' product,
which only takes a single pass over the input data with computational cost
linear in the size of the data.
Finally, Algorithm~\ref{alg:N2LS} uses label smoothing in the initialization (the parameter $\ep$).
This will be useful for proving convergence results and can also improve generalization~\cite{muller2019does}.

\xhdr{The special case of a linear mixing function}
The mixing function $\sigma$ is responsible for the nonlinearity of the method.
The linear mixing function $\sigma(a,b) = a+b$ reduces \algname{} to a clique-expansion approach, which corresponds to a normalized version of~\cite{zhou2007hypergraph} for $\beta=0$.
To see this, let $K$ be the $n \times|\mathcal E|$ incidence matrix of the hypergraph $H$, where $K_{i,e} = 1$ if node $i$ is in the hyperedge $e$ and $K_{i,e}=0$ otherwise;
furthermore, let $W$ be the diagonal matrix of hyperedge weights $\tau(e)$, $e\in \mathcal E$.
Then $2 \big(KWK^\top\big)_{ij}  = \sum_k \A_{ijk} + \A_{ikj}$. 
Thus, for $\sigma(a,b) = a+b$,
\begin{align}
\S(f)_i =  \delta_i^{-1/2}\! \sum_{jk} \A_{ijk}\delta_j^{-1/2}\!f_j + \A_{ijk}\delta_k^{-1/2}f_k =  \delta_j^{-1/2} \!\sum_{j}(\sum_k \A_{ijk}+\A_{ikj})\delta_j^{-1/2}f_j = \big( \Theta f \big)_i,
\end{align}
where  $\Theta = 2 D_H^{-1/2} KWK^T D_H^{-1/2}$ is the normalized adjacency of the clique expansion graph~\cite{zhou2007hypergraph}.

\subsection{Global convergence and optimization framework}

Our \algname{} method extends standard LS in a natural way.  However, with the
nonlinear mixing function $\sigma$, it is unclear if the iterates even converge
or to what they might converge. In this section, we show that, remarkably,
\algname{} is globally convergent for a broad class of mixing functions 
and is minimizing a regularized objective similar to \eqref{eq:LS_loss}.
Lemmas used in the proofs are provided in Appendix~\ref{app:lemmas}.

\def\map{\mathit \Phi}
For convergence, we only require the mixing function to be positive and one-homogeneous. We recall below these two properties for a general map $\map$:
\begin{align}
  \text{\emph{Positivity}:}\; & \text{$\map(x)>0$ for all $x>0$}. \\
  \text{\emph{One-homogeneity}:}\; & \text{$\map(c \cdot x)= c \cdot \map(x)$ for all coefficients $c >0$ and all $x$}.
\end{align}
For technical reasons, we require entry-wise positive initialization. This is the reason for the smoothed membership vectors
$y_\ep = (1-\ep)Y_{:,\ell}+\ep\mathbf{1}$ in Algorithm~\ref{alg:N2LS}.
This assumption is not restrictive in practice as $\ep$ can be chosen fairly small, and we
can also interpret this as a type of label smoothing~\cite{muller2019does,szegedy2016rethinking}.

The following theorem says that the \algname{} iterates converge
for a broad class of mixing functions. This is a corollary of a more general result that we prove later (Theorem~\ref{thm:general-convergence}).
\begin{theorem}\label{thm:convergence}
  Let $f^{(r)}$ be the iterates in Algorithm~\ref{alg:N2LS}.
  If $\sigma$ is positive and one-homogeneous, then the sequence $\{f^{(r)}\}_{r}$ converges to a unique stationary point $f^*>0$ with $\varphi(f^*)=1$.
\end{theorem}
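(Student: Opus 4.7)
The plan is to frame the iteration as a normalized fixed-point scheme on the interior of the positive orthant and then invoke nonlinear Perron--Frobenius-type contraction arguments. Define the map
\[
  T(f) \;=\; \frac{\map(f)}{\varphi(\map(f))}, \qquad \map(f) := \alpha\,\S(f)+\beta\, Sf + \gamma\, y_\ep,
\]
so that the recursion in \eqref{eq:NLS} is simply $f^{(r+1)} = T(f^{(r)})$. The first step is to establish three basic structural properties of $T$: (i) $T$ maps the open positive cone $\RR^n_{>0}$ into itself---this uses positivity of $\sigma$ (so $\S(f)>0$), nonnegativity of $S$, and strict positivity of the smoothed label vector $y_\ep$; (ii) $T$ is $0$-homogeneous, i.e.\ $T(cf)=T(f)$ for all $c>0$, because $\S$ is one-homogeneous when $\sigma$ is one-homogeneous, $S$ is linear, the source $\gamma y_\ep$ is irrelevant after rescaling by $\varphi$, and $\varphi$ itself is one-homogeneous by inspection of \eqref{eq:varphi}; (iii) any fixed point $f^*$ automatically satisfies $\varphi(f^*)=1$ by construction.

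Next I would pass to the Hilbert projective metric $d_H(f,g)=\log(\max_i f_i/g_i)-\log(\min_i f_i/g_i)$ on $\RR^n_{>0}$, which quotients out the scaling ambiguity and is the right metric for $0$-homogeneous maps. The cornerstone of the argument is the standard fact from nonlinear Perron--Frobenius theory (as developed in the cited work of Gautier--Tudisco--Hein) that an order-preserving, positive, one-homogeneous map on $\RR^n_{>0}$ is nonexpansive in $d_H$; the tensor contraction $\S$ and the linear map $S$ both satisfy these hypotheses. Hence $\map$ is nonexpansive in $d_H$ up to normalization. The crucial upgrade to \emph{strict} contractivity will come from the additive term $\gamma y_\ep$: since $y_\ep$ is a strictly positive constant vector, adding $\gamma y_\ep$ to $\alpha\S(f)+\beta Sf$ compresses the ratio $\max_i/\min_i$ and yields a Birkhoff-type coefficient $\rho<1$ such that $d_H(T(f),T(g))\le\rho\, d_H(f,g)$. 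The explicit dependence of $\rho$ on $\alpha,\beta,\gamma,\ep$ is where the positivity of $y_\ep$ is quantitatively used.

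Once the strict contraction is in hand, the Banach fixed-point theorem applied on the complete metric space $(\{f>0:\varphi(f)=1\},d_H)$ yields a unique fixed point $f^*>0$ with $\varphi(f^*)=1$, and the iterates $f^{(r)}$ converge to $f^*$ geometrically. Convergence in $d_H$ combined with the normalization $\varphi(f^{(r+1)})=1$ (which holds for every $r\ge 1$) upgrades to convergence in the usual Euclidean norm, because on the slice $\{\varphi=1\}$ the two metrics induce the same topology.

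The main obstacle I anticipate is step two: establishing that the additive source term $\gamma y_\ep$ produces a quantitative Birkhoff contraction constant $\rho<1$ uniformly over the iterates. The nonlinearity of $\S$ through $\sigma$ means one cannot directly appeal to a matrix Birkhoff coefficient; instead, one must use the one-homogeneity of $\sigma$ to control the oscillation $\max_i/\min_i$ of $\map(f)$ in terms of that of $f$, and then exploit the positive floor provided by $\gamma y_\ep$. This is exactly the type of estimate carried out in the more general Theorem~\ref{thm:general-convergence}, from which the present statement will follow once one checks that the hypotheses of that theorem are met by $\S$, $S$, $y_\ep$, and $\varphi$.
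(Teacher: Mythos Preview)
Your plan is essentially the paper's own route: reduce to the general Theorem~\ref{thm:general-convergence}, whose proof works on the Hilbert metric, shows the normalized map $\tilde G(f)=G(f)/\varphi(G(f))$ with $G(f)=F(f)+y$ preserves the slice $\RR^n_{++}/\varphi$, and extracts strict contractivity precisely from the additive positive term $y=\gamma y_\ep$ (via the bound $F(f)\leq Cy$ of Lemma~\ref{lem:upper-bound} and the logarithm inequality of Lemma~\ref{lem:logarithm}). So the overall strategy and the identification of the ``positive floor'' $\gamma y_\ep$ as the source of the contraction are exactly right.

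One correction: your claim (ii) that $T$ is $0$-homogeneous is false. With the additive source, $\map(cf)=c\big(\alpha\S(f)+\beta Sf\big)+\gamma y_\ep$ is not a scalar multiple of $\map(f)$, so $T(cf)\neq T(f)$ in general; the phrase ``the source $\gamma y_\ep$ is irrelevant after rescaling by $\varphi$'' is not correct. This does not damage the argument, because you never actually use $0$-homogeneity: what matters is that $T$ maps $\RR^n_{++}$ into the slice $\{\varphi=1\}$, so from $r\geq 1$ onward all iterates live on the slice, and the contraction estimate is carried out for $u,v$ on that slice (where one has $m(u/v)\leq 1\leq M(u/v)$, which is what the paper exploits). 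Just drop the $0$-homogeneity claim and work directly on the slice.
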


Even if the iterates converge, we would still like to know to what they converge.
We next show that for differentiable mixing functions $\sigma$, %under some additional constraints, 
a scaled version of $f^*$ minimizes a regularized objective
that enforces local and global consistency.
For a smoothed membership vector $y_\ep = (1-\ep)Y_{:,\ell}+\ep \mathbf{1}$ with $0 < \ep < 1$, consider the loss function:
\begin{equation}\label{eq:loss}
    \vartheta(f)  = \frac 12 \bigg\{ \Big\| f - \frac{y_\ep}{ \varphi(y_\ep)} \Big\|_2^2 + \lambda \sum_{ij}A_{ij}\Big(\frac{f_i}{ \sqrt{d_i} }-\frac{f_j}{ \sqrt{d_j}}\Big)^2 + \mu   \sum_{ijk}\A_{ijk} \Big(\frac{f_i}{ \sqrt{\delta_i}}- \frac12 \sigma\Big(\frac{f_j}{ \sqrt{\delta_j}},\frac{f_k}{ \sqrt{\delta_k}}\Big) \Big)^2  \bigg\}
\end{equation}
As for the case of standard LS, $\vartheta$ consists of a global consistency
term regularized by a local consistency term. However, there are two main
differences. First, the global consistency term now considers a normalized
membership vector $\tilde y_\ep = y_\ep/\varphi(y_\ep)$.  As $\varphi(\tilde y_\ep)=1$,
we seek for a minimizer of $\vartheta$ in the slice $\{f:\varphi(f)=1\}$.
Second, the regularizer now combines the normalized
Laplacian term with a new tensor-based term that accounts for higher-order interactions.

Analogous to standard LS, \algname{} can be interpreted as a projected diffusion
process that spreads the input label assignment via the nonlinear gradient flow
corresponding to the energy function $\tilde \vartheta(f) = \vartheta(f) - \frac \mu 2 \varphi(f)^2$. In fact, for $y_\ep$ such that $\varphi(y_\ep)=1$, we have that
\begin{equation}\label{eq:EulerNLS}
    f-h\nabla\tilde \vartheta(f) = (1-h-\lambda h +\mu h)f  + h\lambda Sf + h \mu \S(f) + h y_\ep \, .
\end{equation}
(A proof of this identity is the proof of Theorem~\ref{thm:loss}.)
Thus, our \algname{} iterates in \eqref{eq:NLS} correspond to projected gradient descent applied to $\tilde\vartheta$, with step length $h$ chosen so that $(1-h)/h = \lambda + \mu$.
This is particularly useful in view of \eqref{eq:loss}, since $\tilde \vartheta$ and $\vartheta$ have the same minimizing points on $\{f:\varphi(f)=1\}$. Moreover, as  $\nabla \psi$ in \eqref{eq:ODE} can be interpreted as a discrete Laplacian operator on the graph~\cite{zhou2004learning},
we can interpret $\nabla \tilde \vartheta$, and thus $\S$,  as a hypergraph Laplacian operator,
which adds up to the recent literature on (nonlinear) Laplacians on hypergraphs~\cite{chan2018spectral,louis2015hypergraph}.

Unlike the standard label spreading, the loss functions $\vartheta$ and $\tilde \vartheta$ are not convex in general.
Thus, the long-term behavior $\lim_{t\to\infty}f(t)$ of the gradient flow $\dot f(t) = -\nabla \tilde \vartheta(f(t))$ is not straightforward.
Despite this, the next Theorem \ref{thm:loss} shows that the \algname{} can converge to a unique global minimizer of $\vartheta$ over the set of nonnegative vectors.
For this result, we need an additional assumption of differentiability on the mixing function $\sigma$. 
\begin{theorem}\label{thm:loss}
  Let $f^{(r)}$ be the sequence generated by Algorithm~\ref{alg:N2LS}.
  If $\sigma$ is positive, one-homogeneous, and differentiable, then the sequence $\{f^{(r)}\}_{r}$
  converges to the unique global solution of the constrained optimization problem
   \begin{equation}\label{eq:constrained_loss}
   \min \vartheta(f) \; \text{s.t. }f> 0 \text{ and } \varphi(f)=1.
  \end{equation}
\end{theorem}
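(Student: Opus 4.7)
The plan is to identify the unique positive fixed point $f^*$ produced by Theorem~\ref{thm:convergence} with the unique global minimizer of \eqref{eq:constrained_loss}. I proceed in three stages.

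First, I verify the gradient identity \eqref{eq:EulerNLS}. Using that $\sigma$ is positive, one-homogeneous, and now also differentiable, I expand the three terms of $\vartheta$. The squared hyperedge contribution splits into a pure $\|f\|^2/2$ piece (because $\sum_{jk}\A_{ijk}=\delta_i$), a cross piece equal to $-f^\top\S(f)/2$, and a square piece equal to $\varphi(f)^2/2$; the last piece is precisely what is subtracted to pass from $\vartheta$ to $\tilde\vartheta$. Computing $\nabla\tilde\vartheta$ then only requires the gradient of $f^\top\S(f)$: since $\S$ is one-homogeneous and arises from a symmetric tensor contraction, Euler's identity together with the symmetry of $\A$ yields $\nabla(f^\top\S(f))=2\,\S(f)$. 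Substituting and rearranging produces $f - h\nabla\tilde\vartheta(f) = \alpha\S(f) + \beta Sf + \gamma y_\ep$ for the matching parameters $\alpha=h\mu$, $\beta=h\lambda$, $\gamma=h$ with $h=1/(1+\lambda+\mu)$, so that each iteration of Algorithm~\ref{alg:N2LS} is the radial projection onto $\{\varphi=1\}$ of a gradient step for $\tilde\vartheta$.

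Second, Theorem~\ref{thm:convergence} applies under the hypotheses of Theorem~\ref{thm:loss} and gives $f^{(r)}\to f^*$ with $f^*>0$ and $\varphi(f^*)=1$. Passing to the limit in the iteration rule, using the identity from step one, yields $\nabla\tilde\vartheta(f^*)\propto f^*$; equivalently, $f^*$ is a positive ``eigenvector'' of the nonlinear gradient map $\nabla\tilde\vartheta$.

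Third, I identify $f^*$ with the unique global minimizer of \eqref{eq:constrained_loss}. Existence of a minimizer follows from compactness of $\{f\geq 0:\varphi(f)=1\}$ (which is compact because $\varphi$ is continuous, positive, and one-homogeneous on the closed positive orthant) and continuity of $\vartheta$. Since $\vartheta=\tilde\vartheta+\mu/2$ on this set, $\vartheta$ and $\tilde\vartheta$ share the same minimizers. Label smoothing ($\ep>0$) keeps $y_\ep/\varphi(y_\ep)$ strictly positive, and together with the quadratic attraction in $\vartheta$ this forces any minimizer $f^\star$ into the open positive orthant. Such $f^\star$ is a KKT point and therefore a positive constrained critical point of $\tilde\vartheta$ on $\{\varphi=1\}$. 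Using the algebraic structure that both $\varphi$ and $\S$ are built from the same symmetric tensor $\A$ and the same one-homogeneous $\sigma$, I then show that any constrained critical point must also satisfy $\nabla\tilde\vartheta(f^\star)\propto f^\star$, that is, $f^\star$ is a fixed point of the iteration. The uniqueness part of Theorem~\ref{thm:convergence} then forces $f^\star=f^*$.

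The main obstacle is the final implication in step three, reconciling the KKT direction $\nabla\varphi(f^\star)$ with the ``radial'' direction $f^\star$. The naive KKT condition gives $\nabla\tilde\vartheta(f^\star)\parallel\nabla\varphi(f^\star)$, whereas the iteration's fixed-point equation gives $\nabla\tilde\vartheta(f^\star)\parallel f^\star$. Bridging these two is expected to follow from the Euler identity $\nabla\varphi(f^\star)\cdot f^\star=\varphi(f^\star)=1$ together with the specific one-homogeneous tensor structure of $\varphi$ and $\S$, which makes $\nabla\varphi$ itself expressible through an $\S$-like contraction, so that at a positive constrained critical point the two directions necessarily coincide up to a positive scalar. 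Differentiability of $\sigma$, not required for Theorem~\ref{thm:convergence}, is indispensable precisely at this step.
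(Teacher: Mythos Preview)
Your step~1 is essentially the paper's computation. The divergence is in steps~2--3, and the ``main obstacle'' you flag is a genuine gap in your route that the paper avoids entirely.

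The paper does \emph{not} conclude $\nabla\tilde\vartheta(f^*)\propto f^*$ and then wrestle with KKT. Its central point is that the modification $\tilde\vartheta=\vartheta-\tfrac{\mu}{2}\varphi^2$ is engineered so that the \emph{unconstrained} gradient vanishes at the fixed point: one checks, via the identity
\[
f^\top\big(D_Hf-\A\sigma(f)\big)=E_2(D_H^{1/2}f)-\varphi(D_H^{1/2}f)^2
\]
together with Euler's theorem for one-homogeneous maps (Lemma~\ref{lem:Euler}), that $\nabla\{E_2-\varphi^2\}(f)=2\big(f-\S(f)\big)$, and hence
\[
\nabla\tilde\vartheta(f)=(1+\lambda+\mu)f-\mu\,\S(f)-\lambda\,Sf-\tilde y,
\]
with $\tilde y=y_\ep/\varphi(y_\ep)$. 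So $\nabla\tilde\vartheta(f)=0$ is exactly the fixed-point equation $f=\alpha\S(f)+\beta Sf+\gamma\tilde y$, and uniqueness comes from Theorem~\ref{thm:convergence}. No Lagrange multiplier, no $\nabla\varphi$, no need to align two directions.

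Your proposed bridge---arguing that at a constrained critical point the KKT direction $\nabla\varphi(f^\star)$ must coincide (up to scale) with $f^\star$---does not hold. For a one-homogeneous $\varphi$, Euler's identity gives only $\nabla\varphi(f)\cdot f=\varphi(f)$, not parallelism; already for $\sigma(a,b)=a+b$ the map $\varphi$ is a weighted Euclidean-type norm whose gradient at a generic positive $f$ is not a scalar multiple of $f$. So the ``expected'' argument you sketch would fail. The fix is not to strengthen that step but to drop it: once you see that $\nabla\tilde\vartheta=0$ (unconstrained) characterizes the fixed point, the whole KKT detour is unnecessary.

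Two smaller points. In step~1 your identity should carry $\tilde y=y_\ep/\varphi(y_\ep)$, not $y_\ep$; the paper states \eqref{eq:EulerNLS} under $\varphi(y_\ep)=1$. And the compactness/interior-minimizer argument in your step~3 is reasonable scaffolding, but the paper does not spell it out; it simply notes that $\vartheta$ and $\tilde\vartheta$ share minimizers on the slice and identifies the unique stationary point with $f^*$.
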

\begin{proof}
Let 
$$
E_1(f) = \sum_{ij}A_{ij}(f_i/\sqrt{d_i}-f_j/\sqrt{d_j})^2, \qquad  E_2(f) = \sum_{ijk}\A_{ijk}(f_i/\sqrt{\delta_i}-\sigma(f_j/\sqrt{\delta_j},f_k/\sqrt{\delta_k})/2)^2
$$
and consider the following modified loss
\begin{equation*}
   \tilde  \vartheta(f)  = \frac 12 \Big\{ \Big\| f - \frac y{\varphi(y)} \Big\|^2 + \lambda E_1(f) + \mu E_2(f)  - \mu \varphi(f)^2 \Big\}, .
\end{equation*}
Clearly, when subject to $\varphi(f)=1$, the minimizing points of $\tilde\vartheta$ and those of $\vartheta$ in \eqref{eq:loss} coincide. We show that the gradient of the loss function $\tilde \vartheta$  vanishes on $f^*>0$ with $\varphi(f^*) = 1$ if and only if $f^*$ is a fixed point for the iterator of Algorithm \ref{alg:N2LS}. 

For simplicity, let us write $\tilde y = y/\varphi(y)$ with $y>0$.  We have $\nabla \|f-\tilde y\|^2 = 2(f-\tilde y)$ and  $\nabla E_1(f) = 2\L f=2(I-D_G^{-1}AD_G^{-1})f$. As for $E_2$, observe that from $\sum_{jk}\A_{ijk}=\delta_i$,we get 
\begin{align*}
    f^\top (D_H f-\A \sigma(f)) &= \sum_{i}\delta_i f_i^2 - \sum_{ijk}f_i \A_{ijk} \sigma(f_j,f_k) = \sum_{ijk} \A_{ijk} \big(f_i^2 - f_i\sigma(f_j,f_k)\big) \\
    &= \sum_{ijk} \A_{ijk} \Big(f_i - \frac{\sigma(f_j,f_k)}{2}\Big)^2 - \frac 1 4 \sum_{jk}B_{jk} \sigma(f_j,f_k)^2
\end{align*}
with $B_{jk} = \sum_i \A_{ijk}$. Thus, it holds that
$$
f^\top (D_H f-\A \sigma(f)) = E_2(D_H^{1/2}f) - \varphi(D_H^{1/2}f)^2\, .
$$
Now, since $\sigma(f)$ is one-homogeneous and differentiable, so is $F(f) = \P\sigma(f)$, and  using Lemma \ref{lem:Euler} we obtain $\nabla \{E_2(D_H^{1/2}f)-\varphi(D_H^{1/2}f)^2\} = 2(D_Hf - \A \sigma(f))$ which, with the change of variable $f\mapsto D_H^{-1/2}f$, yields
$$
\nabla \{E_2(f)-\varphi(f)^2\} = f-D_H^{-1/2}\A\sigma(D_H^{-1/2}f) 
$$

Altogether we have that
\begin{align*}
    \nabla \tilde \vartheta (f)  &= f-\tilde y + \lambda (I-D_G^{-1/2}AD_G^{-1/2})f + \mu \{ f-D_H^{-1/2}\A\sigma(D_H^{-1/2}f) \}\\
    &=(1+\lambda+\mu)f - \lambda D_G^{-1/2}AD_G^{-1/2}f - \mu D_H^{-1/2}\A\sigma(D_H^{-1/2}f) - \tilde y,
\end{align*}
which implies  that $f^*\in \RR^n_{++}/\varphi$ is such that  $\nabla \tilde \vartheta (f^*) = 0$ if and only if $f^*$ is a fixed point of \algname{}, i.e.\ 
$$
 f^* = \alpha D_H^{-1/2}\A\sigma(D_H^{-1/2}f^*) + \beta  D_G^{-1/2}AD_G^{-1/2}f^* +\gamma  \tilde y
$$
with $\lambda = \beta/\gamma$, $\mu=\alpha/\gamma$ and $\alpha+\beta+\gamma=1$.

Finally, by Theorem \ref{thm:convergence} we know that the \algname{} iterations in Algorithm \ref{alg:N2LS} converge to $f^*>0$, $\varphi(f^*)=1$ for all positive starting points $f^{(0)}$. Moreover, $f^*$ is the unique fixed point in the slice $\RR^n_{++}/\varphi = \{f>0:\varphi(f) =1\}$. As $\vartheta$ and $\tilde \vartheta$ have the same minimizing points on that slice, this shows that $f^*$ is the global solution of $\min \{ \vartheta(f) : f\in \RR^n_{++}/\varphi\}$, concluding the proof.
\end{proof}

\subsection{Main convergence result and extension to other orders} \label{sec:fo_nonlin}
Theorem~\ref{thm:convergence} is a direct consequence of our main convergence theorem below,
where $F(f) = \alpha D_H^{-1/2}\A\sigma(D_H^{-1/2}f)+\beta D_G^{-1/2}AD_G^{-1/2}f$ and $y = \gamma y_\ep$.
By Lemma~\ref{lem:upper-bound} all of the assumptions for Theorem~\ref{thm:general-convergence} are satisfied, and the convergence follows.

\begin{theorem}\label{thm:general-convergence}
    Let $F:\RR^n\to\RR^n$ and $\varphi:\RR^n\to\RR$ be positive and one-homogeneous mappings, and let $y$ be a positive vector. If there exists $C>0$ such that $F(f)\leq Cy$,  for all $f$ with $\varphi(f)=1$, then, for any $f^{(0)}>0$, the sequence 
  $$
  g^{(r)} = F(f^{(r)}) + y \qquad f^{(r+1)} = g^{(r)}/\varphi(g^{(r)})
  $$
  converges to $f^*>0$, unique fixed point of $F(f)+y$ such that $\varphi(f^*)=1$. 
\end{theorem}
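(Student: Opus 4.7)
The strategy is to recast the iteration as a self-map of the slice $\S = \{f>0 : \varphi(f)=1\}$, namely $T(f) := (F(f)+y)/\varphi(F(f)+y)$, and then to show that $T$ is a strict contraction in Hilbert's projective metric $d_H$, so that Banach's fixed point theorem supplies both uniqueness of the fixed point and global geometric convergence. Normalizing $f^{(0)}$ to $f^{(0)}/\varphi(f^{(0)})$ at the outset (harmless by one-homogeneity of $F$ and $\varphi$) turns the stated iteration $g^{(r)} = F(f^{(r)})+y$, $f^{(r+1)} = g^{(r)}/\varphi(g^{(r)})$ into $f^{(r+1)} = T(f^{(r)})$.

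The first step is to set up an order-interval trapping property. Using the one-homogeneity of $F$, the hypothesis $F(f)\le Cy$ on $\{\varphi(f)=1\}$ extends to $F(f)\le C\varphi(f)\,y$ for every $f>0$. Consequently, for each $f\in\S$ the vector $g=F(f)+y$ lies in the componentwise order interval $[y,(C+1)y]$, so after a single step every iterate lies in the compact set
\begin{equation*}
K \,:=\, \bigl\{\, g/\varphi(g) \,:\, y \le g \le (C+1)y \,\bigr\} \,\subset\, \S,
\end{equation*}
whose Hilbert-projective diameter is bounded by $2\log(C+1)$. Since $T(\S)\subseteq K\subseteq \S$, it suffices to analyze $T$ on $K$.

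For existence of a fixed point, I would work on the convex compact box $\Omega=\{f:y\le f\le(C+1)y\}$ and apply Brouwer's theorem to the continuous self-map $\tilde T(f) := F(f/\varphi(f))+y$, which sends $\Omega$ into itself by the trapping bound. A fixed point $f^*\in\Omega$ of $\tilde T$ satisfies $F(f^*/\varphi(f^*))+y=f^*$, and rescaling gives $\hat f^*:= f^*/\varphi(f^*)\in\S$ with $T(\hat f^*)=\hat f^*$ and $\lambda \hat f^*=F(\hat f^*)+y$ for $\lambda = \varphi(F(\hat f^*)+y)>0$.

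The main step, and the principal obstacle, is upgrading existence to uniqueness and global convergence by showing that $T$ strictly contracts $(K,d_H)$. Because $d_H$ is scale-invariant, $d_H(T(f),T(\tilde f)) = d_H(F(f)+y,\, F(\tilde f)+y)$. The key elementary fact I would exploit is the cone inequality that, for $u,v\ge 0$ componentwise bounded above by $Cy$, adding the fixed positive vector $y$ strictly contracts Hilbert distance by a factor $q=q(C)\in(0,1)$. Crucially, monotonicity of $F$ is \emph{not} among the hypotheses, so the classical Birkhoff-Hopf non-expansiveness of $F$ under $d_H$ is unavailable; all of the contraction must be extracted from the additive shift $y$ together with the uniform bound $F(f)\le Cy$, which is precisely the regime treated by the nonlinear Perron-Frobenius machinery of Gautier-Tudisco-Hein cited earlier. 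Once this contraction is in hand, Banach's fixed point theorem on the complete metric space $(K,d_H)$ delivers the unique fixed point $f^*$ of $T$ in $\S$ and the geometric convergence $f^{(r)}\to f^*$; because $d_H$ induces the usual topology on the compact set $K$, convergence holds in the ordinary sense as well.
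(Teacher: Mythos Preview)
Your overall architecture---recast the iteration as a self-map $T$ of the slice $\{\varphi=1\}$ and prove contraction in Hilbert's projective metric---matches the paper's, and the trapping observation $T(\S)\subseteq K$ with $K$ of bounded Hilbert diameter is correct and useful (it makes the uniformity of the contraction explicit, which the paper leaves implicit).

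The genuine gap is the contraction step. You correctly note that order-preservation of $F$ is not listed among the hypotheses, and you then propose to obtain the entire contraction from the additive shift by $y$ together with the bound $F\le Cy$. This does not close: your ``key elementary fact'' yields at best
\[
d_H\bigl(F(f)+y,\,F(\tilde f)+y\bigr)\ \le\ q\,d_H\bigl(F(f),\,F(\tilde f)\bigr),
\]
and to reach $q'\,d_H(f,\tilde f)$ you still need $d_H(F(f),F(\tilde f))\le d_H(f,\tilde f)$, i.e.\ non-expansiveness of $F$ in the Hilbert metric, which is precisely what order-preservation supplies. Without monotonicity, $F$ can scramble directions on the slice arbitrarily subject only to $0<F\le Cy$, and there is no relation whatsoever between $d_H(F(f),F(\tilde f))$ and $d_H(f,\tilde f)$. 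The Gautier--Tudisco--Hein theory you invoke also assumes order-preserving maps, so the citation does not bypass the difficulty.

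In fact the paper's own proof tacitly uses order-preservation: the step $m(u/v)\,F(v)=F(m(u/v)\,v)\le F(u)$ (and its $M$-analogue) combines homogeneity with monotonicity of $F$. From this the paper derives the two-sided estimate
\[
m\bigl(G(u)/G(v)\bigr)\ \ge\ \frac{C\,m(u/v)+1}{C+1},\qquad M\bigl(G(u)/G(v)\bigr)\ \le\ \frac{C\,M(u/v)+1}{C+1},
\]
and then applies Lemma~\ref{lem:logarithm} to conclude $d_H(G(u),G(v))\le \tfrac{M(u/v)}{M(u/v)+1/C}\,d_H(u,v)$. You should therefore add order-preservation of $F$ as a hypothesis (it holds in the intended application, since the generalized means $\sigma_p$ are monotone) and carry out this direct estimate; the separate Brouwer existence argument then becomes unnecessary.
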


This general convergence result makes it clear how to transfer the second-order setting discussed in this work to hyperedges of any order.
For example, nonlinearities could be added to the graph term as well. A nonlinear purely first-order label spreading has the general form
$f^{(r+1)} = \beta M \sigma(f^{(r)}) + \gamma y$, 
where $M \sigma(f)$ is the vector with entries $M\sigma(f)_i = \sum_j M_{ij}\sigma(f_j)$, $\sigma$ is a nonlinear map and $M$ is a graph matrix. This
formulation is quite general and provides new convergence results for
nonlinear graph diffusions recently proposed by Ibrahim and Gleich~\cite{ibrahim2019nonlinear}. 
For simplicity and in the interest of space, we do not use this extension in our
experiments, but out of theoretical interest we note that Theorem~\ref{thm:general-convergence} would allow us to take this additional nonlinearity into account.  

\begin{proof}
  Let $\varphi:\RR^n\to\RR$ and $F:\RR^n\to\RR^n$ be one-homogeneous and positive maps. Let $\RR^n_+$ and $\RR^n_{++}$ denote the set of entrywise nonnegative and entrywise positive vectors in $\RR^n$, respectively. Also, let $\RR^n_{++}/\varphi = \{f \in \RR^n_{++} : \varphi(f) = 1\}$.
This result falls within the family of Denjoy-Wolff type theorems for  nonlinear mappings on abstract cones, see e.g., Lemmens et
al.~\cite{lemmens2018denjoy}. Here we provide a simple and self-contained proof. Define
\[
G(f) = F(f) + y \qquad \text{and}\qquad \tilde G(f) = \frac{G(f)}{\varphi(G(f))}\, .
\]
Notice that $f^{(r+1)}=\tilde G(f^{(r)})$, $r=0,1,2,\dots$ and that, by assumption, $G(f)>0$ for all $f>0$. Thus $\varphi(G(f))>0$ and $\tilde G(f)$ is well defined on $\RR^n_{++}$. Moreover, $\tilde G$ preserves the cone slice $\RR^n_{++}/\varphi$ i.e.\ $\tilde G(f)\in \RR^n_{++}/\varphi$ for all $f\in \RR^n_{++}/\varphi$. For two points $u,v \in \RR^n_{++}$ let 
$$
d(u,v) = \log\Big(\frac{M(u/v)}{m(u/v)}\Big)
$$
be the Hilbert distance, 
where $M(u/v) = \max_{i}u_i/v_i$ and $m(u/v) = \min_i u_i/v_i$. 
As  $\sigma$ is one-homogeneous also $\varphi$ is one-homogeneous and thus $\RR^n_{++}/\varphi$ equipped with the Hilbert distance is a complete metric space,  \cite{gautier2019perron} e.g. In order to conclude the proof it is sufficient to show that  $\tilde G$ is a contraction with respect to such metric. In fact, the sequence $f^{(r)}$ belongs to $\RR^n_{++}/\varphi$ for any $f^{(0)}$ and since $(\RR^n_{++}/\varphi,d)$ is complete, the sequence $f^{(r)}$ converges to the unique fixed point $f^*$ of $\tilde G$ in $\RR^n_{++}/\varphi$.

We show below that  $\tilde G$ is a contraction. To this end, first note that by definition we have 
$
d(\tilde G(u),\tilde G(v)) = d(G(u),G(v))
$. 
Now note that, as $\sigma$ is homogeneous and order preserving, for any $u,v\in \RR^n_{++}$ we have $m(u/v)F(v)=F(m(u/v)v)\leq F(u)$, $M(u/v)F(v)=F(M(u/v)v)\geq F(u)$, $m(u/v)\varphi(v)=\varphi(m(u/v)v)\leq \varphi(u)$ and $M(u/v)\varphi(v)=\varphi(M(u/v)v)\geq \varphi(u)$. Moreover, for any $u,v\in \RR^n_{++}/\varphi$ it holds
$$
m(u/v) = \varphi(u)m(u/v)\leq \varphi(v) = 1 \leq \varphi(u)M(u/v) = M(u/v),
$$
thus $m(u/v)\leq 1 \leq M(u/v)$. 
By assumption there exists $C>0$ such that $F(u)\leq Cy$, for all $u\in \RR^n_{++}/\varphi$. Therefore, for any $u,v\in \RR^n_{++}/\varphi$, we have
\begin{align*}
(m(u/v)C &+ 1)(F(v)+ y)= (m(u/v)C+ m(u/v)- m(u/v)+1)F(v)+(m(u/v)C + 1) y \\
&= m(u/v)CF(v) +  m(u/v)F(v) + (1-m(u/v))F(v) + (m(u/v)C + 1) y\\
&\leq CF(u) +  F(u) + (1-m(u/v))Cy + (m(u/v)c + 1) y \\
&= (C+1)(F(u)+ y)
\end{align*}
where we used the fact that $(1-m(u/v))\geq 0$ to get $(1-m(u/v))F(v)\leq (1-m(u/v))Cy$. Thus
$$
m(G(u)/G(v)) = m\big((F(u)+ y) / (F(v)+ y)\big) \geq (m(u/v)C+1)/(C+1)\, .
$$
Similarly, as $(1-M(u/v))\leq 0$,  we have 
\begin{align*}
(M(u/v)C + 1)(F(v)+ y)&\geq C F(u) + F(u) + (1-M(u/v))Cy + (M(u/v)C + 1) y \\
&= (C+1)(F(u)+ y)
\end{align*}
which gives $M(G(u)/G(v)) \leq (M(u/v)C+1)/(C+1)$. Therefore, 
$$
d(G(u),G(v)) = \log\Big(\frac{M(G(u)/G(v))}{m(G(u)/G(v))} \Big) \leq \log\Big( \frac{M(u/v)C+1}{m(u/v)C+1}\Big) = \log\Big( \frac{M(u/v)+\delta}{m(u/v)+\delta} \Big)
$$
with $\delta = 1/C$. Finally, using Lemma \ref{lem:logarithm} we get
$$
d(\tilde G(u),\tilde G(v)) = d(G(u),G(v)) \leq \left(\frac{M(u/v)}{M(u/v)+\delta}\right) d(u,v) < d(u,v)
$$
which shows that $\tilde G$ is a contraction in the complete metric space $(\RR^n_{++}/\varphi,d)$.
\end{proof}

\section{Experiments}
We now perform experiments on both synthetic and real-world data.
Our aim is to compare the performance of standard (first-order) label spreading algorithm with our second-order methods, using different mixing functions $\sigma$. 
Based on \eqref{eq:loss}, a natural class of functions are the one-parameter
family of generalized $p$-means scaled by factor $2$, i.e.,
$\sigma_p(a, b) = 2 \left((a^p+b^p) / 2\right)^{1/p}$.
In particular, we consider
the following generalized means:
\renewcommand{\arraystretch}{1}
\begin{center}
{
    \begin{tabular}{c @{\qquad\qquad} ccccc}
    \toprule
     name & arithmetic & harmonic & $L^2$ & geometric & maximum \\
     \midrule
     $p$  & $p=1$ & $p=-1$ & $p=2$ & $p\to 0$ & $p \to \infty$\\
      \midrule
     $\sigma_p(a,b)$ & $(a+b)$ & $4\big(\frac 1a +\frac 1b \big)^{-1}$ & $2\sqrt{\frac{a^2+b^2}{2}}$ & $2\sqrt{ab}$ &
     $2 \cdot \max(a, b)$ \\
     \bottomrule
    \end{tabular}
}
\end{center}

The function $\sigma_p$ is one-homogeneous, positive, and order-preserving for all values of $p\in\RR$, including the limit cases of $p\in\{0,+\infty\}$.
Thus, by Theorem~\ref{thm:convergence}, Algorithm~\ref{alg:N2LS} converges for any choice of $p$.
The maximum function, however, is not differentiable and thus we cannot prove the computed limit of the sequence $f^{(r)}$ optimizes the loss function \eqref{eq:loss}
(but one can approximate the maximum with a large finite $p$, and  use Theorem~\ref{thm:loss} on such smoothed max).
Also, as shown above, the case $p = 1$ (i.e., $\sigma_p$  linear) essentially corresponds to a clique expansion method,
so Algorithm~\ref{alg:N2LS} is nearly the same as the local and global
consistency method for semi-supervised learning~\cite{zhou2004learning},
where the adjacency matrix is a convex combination of the clique expansion
graph induced by the tensor $\A$ and the graph $A$.
In our experiments, we use at most 40 iterations within Algorithm~\ref{alg:N2LS},
a stopping tolerance of 1e-5, and smoothing parameter $\varepsilon$ = 0.01. Other tolerances and other $\ep$ gave similar~results. The code for implementing \algname{} is available at \url{https://github.com/doublelucker/nhols}.

In addition to comparing against standard label spreading, we also compare
against two other techniques. The first is hypergraph total variation (HTV)
minimization, which is designed for clustering hypergraphs with larger
hyperedges but is still applicable to our setup~\cite{hein2013total}. This is a
state-of-the-art method for pure hypergraph data.  The second is a graph neural
network (GNN) approach, which is broadly considered state-of-the-art for
first-order methods. More specifically, we
use GraphSAGE~\cite{hamilton2017inductive} with two layers, ReLU activation, and
mean aggregation, using node features if available and one-hot encoding features
otherwise. Neither of these baselines incorporate both first- and second-order
information in the graph, as \algname{}.

All of the algorithms that we use have hyperparameters.
For the label spreading and HTV methods, we run 5-fold cross validation
with label-balanced 50/50 splits over a small grid to choose these hyperparameters.
For standard label spreading, the grid is $\beta \in \{0.1, 0.2, \ldots, 0.9\}$.
For higher-order label spreadings, the grid is $\alpha \in \{0.3,0.4,\ldots,0.8\}$ and
$\beta \in \{0.1, 0.25, 0.40, 0.55\}$ (subject to the constraint that $\alpha + \beta < 1$).
HTV has a regularization term $\lambda$, which we search over $\lambda =(1-\beta)/\beta$ for $\beta \in  \{0.1, 0.2, \ldots, 0.9\}$ (i.e., the same grid as LS).
We choose the parameters that give the best average accuracy over the five folds.
The GNN is much slower, so we split the labeled data into a training and
validation sets with a label-balanced 50/50 split (i.e., a 1-fold cross validation,
as is standard for training such models).
We use  ADAM optimizer with default $\beta$ parameters and search
over learning rates $\eta \in \{0.01, 0.001, 0.0001\}$ and
weight decays $\omega \in \{0, 0.0001\}$,  using 15 epochs. 

\subsection{Synthetic benchmark data}
\def\pin{ p_{\text{in}} }
\def\pout{ p_{\text{out}} }

\begin{figure}[t]
    \centering
    \includegraphics[width=\textwidth,clip,trim=.5cm 0 0 0]{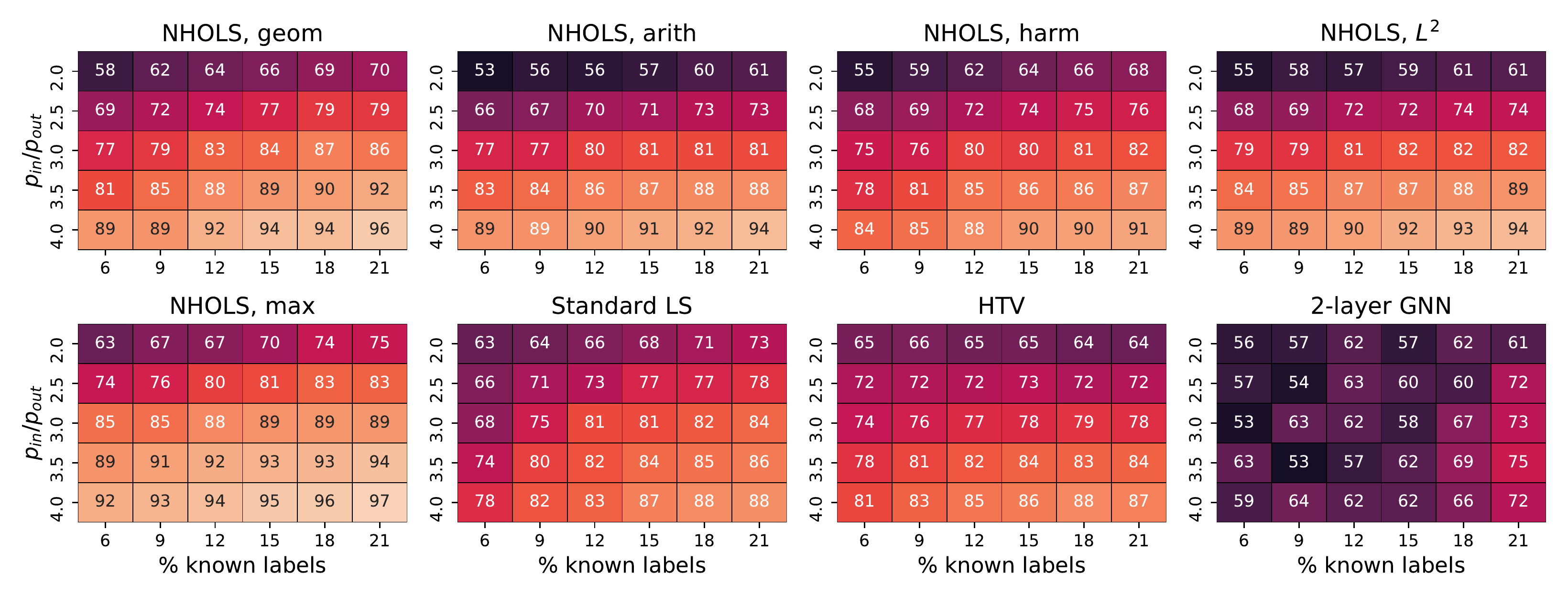}
    \caption{Accuracy on synthetic stochastic block models.
      Each table corresponds to a different method, and table entries are the average accuracy over 10 random instances with the given parameter settins.
    Overall, the various \algname{} methods perform much better than the baselines.}
    \label{fig:sbm}
\end{figure}

\begin{figure}
  \centering
  \includegraphics[width=.5\columnwidth]{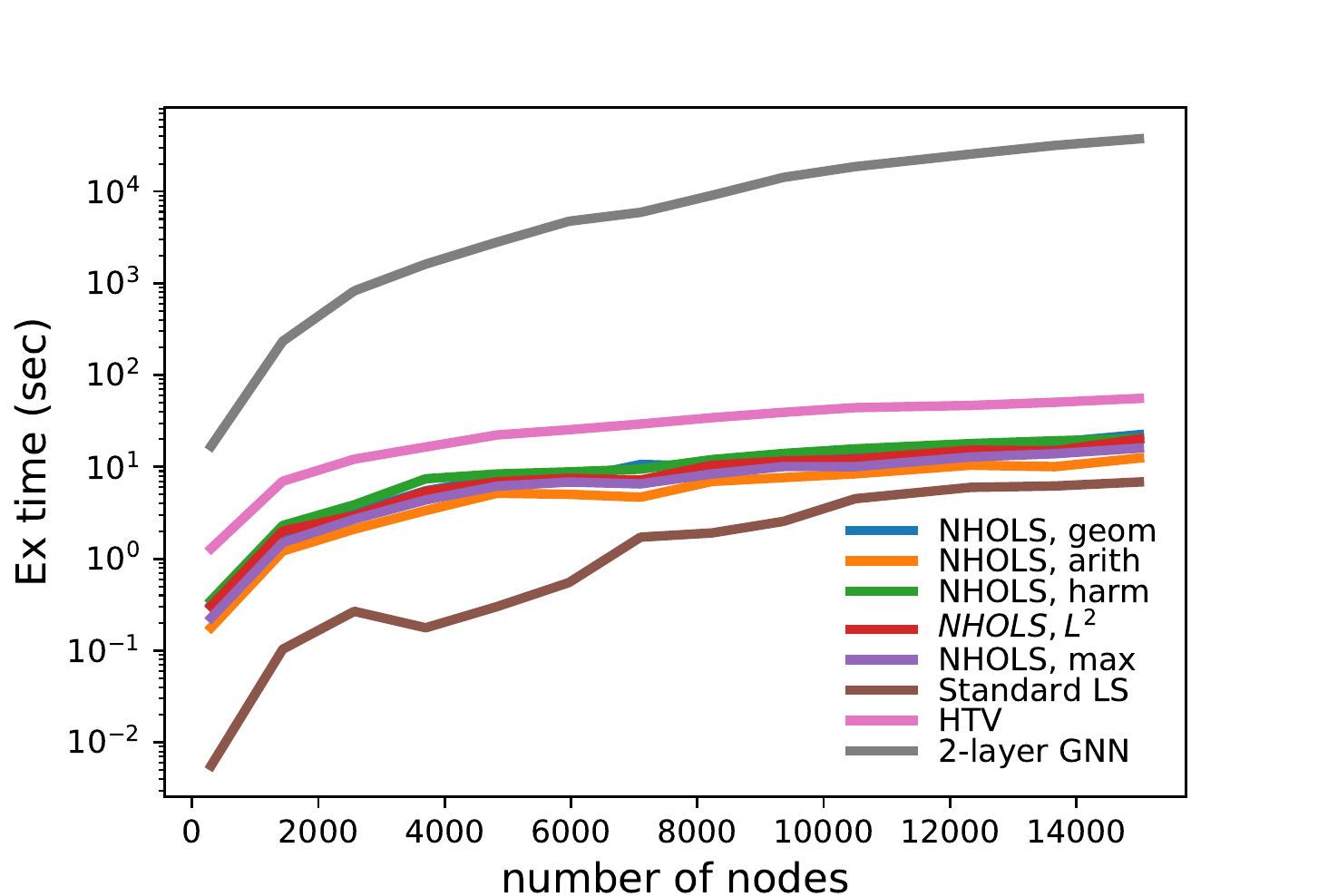}
  \caption{Running times of various algorithms for a single hyperparameter setting with SBM data.
  \algname{} scales linearly with the number of nonzeros in $\S$ and $S$
  and costs a little more than standard LS. HTV is a bit little slower, and the GNN
  is several orders of magnitude slower for even modest network sizes.}
  \label{fig:running_times}
\end{figure}
s
We first compare the semi-supervised learning algorithms on synthetic graph data generated with the Stochastic Block Model (SBM).
The SBM is a generative model for graph data with prescribed cluster structure.
We use the variant of an SBM with two parameters --- $\pin$ and $\pout$ --- which designate the edge probabilities within the same label and across different labels, respectively.
Generative block models are a common benchmark to test the performance of semi-supervised learning methods~\cite{kanade2016global,kloumann2017block,mercado2019generalized,mossel2016local}.
Here we analyze the performance of different methods on random graphs drawn from the SBM where nodes belong to three different classes of size (number of nodes) 100, 200 and 400.
We sample graphs for different values of the parameters $\pin$ and $\pout$; precisely, we fix $\pin=0.1$ and let $\pout = \pin/\rho$ for $\rho \in  \{2, 2.5, 3, 3.5, 4\}$.
We test the various algorithms for different percentages of known labels per each class ($\{6\%,9\%,12\%,15\%,18\%,21\%\}$).
With this setting, small values of $\rho$ correspond to more difficult classification problems.

 The colored tables in Figure~\ref{fig:sbm} show the average clustering accuracy over ten random samples for each SBM setting and each percentage of input labels.
We observe that the nonlinear label spreading methods perform best overall, with the maximum function performing the best in nearly all the cases.
The performance gaps can be quite substantial. For example, when only 6\% of the labels are given,
\algname{} achieves up to $92\%$  mean accuracy, while the baselines do not achieve greater than $81\%$.

Moreover, \algname{} scales linearly with the number of nonzero elements of $\S$
and $S$ and thus is typically just slightly more expensive than standard
LS. This is illustrated in Figure~\ref{fig:running_times}, where we compare mean
execution time over ten runs for different methods. We generate random SBM
graphs with three labels of equal size, increasing number of nodes $n$ and edge
probabilities $\pin = \log(n)^2/n$ and $\pout=\pin/3$. Each method is run with a
fixed value of the corresponding hyperparameters, chosen at random each time. We
note that HTV and GNN are around one and three-to-five orders of magnitude
slower than \algname{},~respectively.

\subsection{Real-world data}

\renewcommand{\arraystretch}{1}
\begin{table}[t]
\caption{Mean accuracy over five random samples of labeled nodes
over six datasets and four percentages of labeled nodes.
We compare our \algname{} method using five different mixing functions
to standard LS, Hypergraph Total Variation
minimization~\cite{hein2013total} and a Graph Neural Network model~\cite{hamilton2017inductive}.
Incorporating higher-order information
into label spreading always improves performance.
HTV is sometimes competitive, 
and the GNN has poor performance.
}
\label{tab:realworld}
\centering
{\footnotesize
\begin{tabular}{l  r  cccc   cccc}
\toprule
& & \multicolumn{4}{c}{Rice31 (n = 3560)} & \multicolumn{4}{c}{Caltech36 (n = 590)} \\
\cmidrule(lr){3-6} \cmidrule(lr){7-10}
method & \% labeled & 5.0\% & 10.0\% & 15.0\% & 20.0\% & 5.0\% & 10.0\% & 15.0\% & 20.0\% \\
\midrule
NHOLS, arith & & 88.0 & \textbf{89.7} & \textbf{90.8} & 91.1 & 80.9 & 82.0 & 85.3 & 85.4 \\
NHOLS, harm & & \textbf{88.1} & 89.6 & 90.7 & \textbf{91.3} & 79.6 & 81.7 & 84.6 & 84.9 \\
NHOLS, $L^2$ & & 88.0 & \textbf{89.7} & 90.7 & 91.2 & 80.7 & \textbf{82.3} & 85.0 & \textbf{85.3} \\
NHOLS, geom & & 87.9 & \textbf{89.7} & 90.7 & 91.2 & 80.3 & \textbf{82.3} & 84.8 & 85.0 \\
NHOLS, max & & 87.6 & 89.3 & 90.3 & 90.7 & \textbf{81.0} & \textbf{82.3} & \textbf{86.0} & \textbf{85.3} \\
Standard LS & & 83.1 & 87.6 & 89.5 & 90.6 & 70.1 & 76.6 & 80.9 & 81.9 \\
HTV & & 81.6 & 85.7 & 87.7 & 90.0 & 66.1 & 76.1 & 75.9 & 81.8 \\
2-layer GNN & & 54.2 & 69.2 & 79.1 & 80.6 & 44.6 & 44.3 & 61.6 & 54.2 \\
\midrule
& & \multicolumn{4}{c}{optdigits (n = 5620)} & \multicolumn{4}{c}{pendigits (n = 10992)} \\
\cmidrule(lr){3-6} \cmidrule(lr){7-10}
method & \% labeled & 0.4\% & 0.7\% & 1.0\% & 1.3\% & 0.4\% & 0.7\% & 1.0\% & 1.3\% \\
\midrule
NHOLS, arith & & 94.9 & 95.7 & 96.2 & \textbf{97.5} & 91.1 & 91.4 & 95.4 & 95.8 \\
NHOLS, harm & & 93.2 & 95.5 & 95.8 & 97.2 & 90.7 & 91.2 & 95.4 & 95.6 \\
NHOLS, $L^2$ & & \textbf{95.5} & 95.7 & \textbf{96.3} & 97.7 & 91.1 & 91.5 & 95.4 & 95.7 \\
NHOLS, geom & & 94.0 & 95.6 & 95.9 & 97.4 & 91.0 & 91.3 & 95.4 & 95.7 \\
NHOLS, max & & 94.8 & \textbf{95.8} & 95.9 & 97.4 & \textbf{93.6} & \textbf{92.8} & \textbf{95.9} & \textbf{95.9} \\
Standard LS & & 93.8 & 93.8 & 95.6 & 96.7 & 91.3 & 92.0 & 95.3 & 95.0 \\
HTV & & 87.0 & 90.9 & 93.3 & 94.1 & 82.2 & 91.3 & 93.3 & 93.3 \\
2-layer GNN & & 52.1 & 62.6 & 70.9 & 73.6 & 56.8 & 67.7 & 63.0 & 64.4 \\
\midrule
& & \multicolumn{4}{c}{MNIST (n = 60000)} & \multicolumn{4}{c}{Fashion-MNIST (n = 60000)} \\
\cmidrule(lr){3-6} \cmidrule(lr){7-10}
method & \% labeled & 0.1\% & 0.3\% & 0.5\% & 0.7\% & 0.1\% & 0.3\% & 0.5\% & 0.7\% \\
\midrule
NHOLS, arith & & 92.6 & 95.3 & 95.7 & \textbf{95.9} & 71.0 & \textbf{75.6} & \textbf{77.9} & 78.2 \\
NHOLS, harm & & 91.8 & 94.9 & 95.6 & 95.8 & 70.8 & 75.3 & \textbf{77.9} & 78.1 \\
NHOLS, $L^2$ & & \textbf{92.7} & \textbf{95.4} & \textbf{95.8} & \textbf{95.9} & 
\textbf{71.3} & \textbf{75.6} & 77.7 & \textbf{78.3} \\
NHOLS, geom & & 92.0 & 95.0 & 95.6 & 95.8 & 70.6 & 75.3 & 77.8 & 77.9 \\
NHOLS, max & & 92.3 & 95.3 & 95.7 & 95.8 & 69.7 & 75.4 & 77.0 & 77.9 \\
Standard LS & & 87.6 & 92.2 & 93.6 & 94.1 & 68.6 & 73.4 & 76.5 & 77.4 \\
HTV & & 79.7 & 88.3 & 90.1 & 90.7 & 59.8 & 68.6 & 70.2 & 72.0 \\
2-layer GNN & & 60.5 & 77.5 & 81.5 & 84.9 & 63.2 & 71.8 & 73.0 & 73.7 \\
\bottomrule
\end{tabular}
}
\end{table}

We also analyze the performance of our methods on six real-world datasets
(Table~\ref{tab:realworld}). The first two datasets come from relational
networks, namely Facebook friendship graphs of Rice University and Caltech from the Facebook100
collection~\cite{traud2012social}. The labels are the dorms in which the students
live --- there are 9 dorms for Rice and 8 for Caltech.
Facebook friendships form edges, and the tensor entries correspond
to triangles (3-cliques) in the graph. 
We preprocessed the graph data by first removing students with missing labels
and then taking the largest connected component of the resulting graph.
The GNN uses a one-hot encoding for features.

The next four graphs are derived from point clouds:
optdigits~\cite{UCI,xu1992methods},
pendigits~\cite{alimoglu1996methods,UCI}, 
MNIST~\cite{lecun2010mnist}, and
Fashion-MNIST~\cite{xiao2017fashion}. 
Each of these datasets has 10 classes, corresponding
to one of 10 digits or to one of 10 fashion items.
In these cases, we first create 7-nearest-neighbor graphs.
Tensor entries correspond to triangles in this graph.
We give the GNN an additional advantage
by providing node features derived from the data points.
The optdigits and pendigits datasets come with several hand-crafted features;
and we use the embedding in first 10 principal components for node features.
The MNIST and Fashion-MNIST datasets contain the raw images;
here, we use an embedding from the first 20 principal components of the images as node features for the GNN.
(We also tried one-hot encodings and the raw data points as features,
both of which had much worse performance.)

We recorded the mean accuracy of five random samples of labeled nodes (Table~\ref{tab:realworld}).
We find that incorporating higher-order information with some mixing function 
improves performance of LS in nearly all cases and also outperforms the baselines.
The absolute accuracy of these methods is also quite remarkable; for example,
the $L^2$ mixing function achieves 92.7\% mean accuracy with just 0.1\% of MNIST
points labeled (6 samples per class) and the maximum mixing function achieves
93.6\% mean accuracy with just 0.4\% of pendigits points labeled (5 samples per class).
Also, a single mixing function tends to have the best performance on a given dataset,
regardless of the number of available labels (e.g., the maximum 
mixing function for Caltech36 and pendigits, the $L^2$ mixing function
for MNIST and Fashion-MNIST).

Finally, the GNN performance is often poor, even in cases where meaningful node features are available.
This is likely a result of having only a small percentage of labeled examples.
For example, it is common to have over 15\% of the nodes labeled just as a validation set
for hyperparameter tuning~\cite{kipf2017semi}.
Still, even with 20\% of nodes labeled and hyperparameter tuning, the GNN
performs much worse than all other methods on the Facebook graphs.

\section{Discussion}

We have developed a natural and substantial extension of traditional label
spreading for semi-supervised learning that can incorporate a broad range of
nonlinearities into the spreading process. These nonlinearities come from mixing
functions that operate on higher-order information associated with the data.
Given the challenges in developing optimization results involving nonlinear
functions, it is remarkable that we can achieve a sound theoretical framework
for our expressive spreading process. We provided guarantees on convergence of
the iterations to a unique solution, and we showed that the process optimizes a
meaningful objective function. 
Finally, the convergence result in Theorem~\ref{thm:general-convergence}
is more general than what we considered in our experiments. This provides a
new way to analyze nonlinearities in graph-based methods that we expect will be
useful for future research.

\subsection*{Acknowledgments} We thank Matthias Hein for supplying the code that implements the HTV algorithm in \cite{hein2013total}.
We thank David Gleich and Antoine Gautier for helpful discussions. 
ARB is supported by NSF Award DMS-1830274,
ARO Award W911NF19-1-0057,
ARO MURI,
and JPMorgan Chase \& Co.
FT is partially supported by INdAM-GNCS.

This research was funded in part by JPMorgan Chase \& Co. Any views or opinions expressed herein are
solely those of the authors listed, and may differ from the views and opinions expressed by JPMorgan Chase
\& Co. or its affiliates. This material is not a product of the Research Department of J.P. Morgan Securities
LLC. This material should not be construed as an individual recommendation for any particular client and is
not intended as a recommendation of particular securities, financial instruments or strategies for a particular
client. This material does not constitute a solicitation or offer in any jurisdiction.

% \bibliography{references}
% \bibliographystyle{plain}

\appendix
\section{Lemmas used in proofs}\label{app:lemmas}

\begin{lemma}\label{lem:logarithm}
Let $a,b,c>0$. Then
$$
\log\Big(\frac{a+c}{b+c}\Big)\leq \frac{a}{a+c}\log\Big(\frac{a}{b}\Big)
$$
\end{lemma}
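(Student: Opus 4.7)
\textbf{Proof proposal for Lemma~\ref{lem:logarithm}.}
The plan is to reduce the two-parameter inequality (after fixing $c$ relative to $a$ and $b$) to a single-variable calculus problem whose monotonicity follows from the classical scalar inequality $\log y \leq y - 1$.

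First, I would normalize. Dividing the numerator and denominator inside the logarithms by $b$, and introducing $t = a/b$ and $s = c/b$, the inequality becomes
\begin{equation*}
\log\Big(\frac{t+s}{1+s}\Big) \;\leq\; \frac{t}{t+s}\log t,
\end{equation*}
or equivalently, after multiplying through by $t+s > 0$,
\begin{equation*}
(t+s)\log\Big(\frac{t+s}{1+s}\Big) \;\leq\; t\log t, \qquad t,s>0.
\end{equation*}
Observe that the case $s=0$ reduces to equality $t \log t = t \log t$, so it suffices to show the right-hand side minus the left-hand side is non-decreasing in $s$.

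Second, I would define $h(s) = t\log t - (t+s)\log\frac{t+s}{1+s}$ for $s\geq 0$ (with $t$ fixed), note that $h(0)=0$, and differentiate. A short calculation using $\frac{d}{ds}\log\frac{t+s}{1+s} = \frac{1}{t+s}-\frac{1}{1+s}$ yields
\begin{equation*}
h'(s) \;=\; -\log\Big(\frac{t+s}{1+s}\Big) - 1 + \frac{t+s}{1+s}.
\end{equation*}
Setting $y = (t+s)/(1+s) > 0$, this is precisely $h'(s) = y - 1 - \log y$, which is non-negative by the classical inequality $\log y \leq y - 1$ (valid for all $y>0$, with equality only at $y=1$). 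Hence $h$ is monotone non-decreasing on $[0,\infty)$, and combined with $h(0)=0$ we conclude $h(s)\geq 0$, i.e., the desired inequality. Unwinding the substitution recovers the original statement for $a,b,c>0$.

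There is no real obstacle here: once the substitution $t=a/b$, $s=c/b$ is made, the derivative collapses to the standard scalar inequality $y - 1 \geq \log y$. The only step worth double-checking is the sign of $h'(s)$, but since the scalar inequality is tight at $y=1$ (which corresponds to $t=1$, i.e., $a=b$, where both sides of the original inequality are $0$), the whole argument is consistent.
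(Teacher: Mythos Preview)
Your proof is correct, but it takes a different route from the paper's. The paper applies Jensen's inequality directly to the convex function $g(x)=x\log x$: writing $\frac{a+c}{b+c}=\frac{b}{b+c}\cdot\frac{a}{b}+\frac{c}{b+c}\cdot 1$ as a convex combination, Jensen gives $\frac{a+c}{b+c}\log\frac{a+c}{b+c}\leq \frac{a}{b+c}\log\frac{a}{b}$, and multiplying by $\frac{b+c}{a+c}$ finishes. Your argument instead fixes the ratio $t=a/b$ and shows monotonicity in the auxiliary parameter $s=c/b$ by differentiating, reducing everything to the elementary scalar bound $\log y\leq y-1$. The Jensen approach is a one-line application once the convex-combination decomposition is spotted; your approach is slightly longer but more systematic and does not require recognizing that particular decomposition. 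The two are closely related underneath---$\log y\leq y-1$ is the tangent-line inequality for $g$ at $1$---but the organizing ideas (convex combination vs.\ monotonicity in a parameter) are genuinely different.
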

\begin{proof}
Let $g(x) = x\log(x)$. We have 
$$
\frac{a+c}{b+c}\, \log\Big(\frac{a+c}{b+c}\Big) = g\Big(\frac{a+c}{b+c}\Big) = g\Big(\frac{b}{b+c}\, \frac a b + \frac{c}{b+c}\Big) \, .
$$ 
As $\frac{b}{b+c}+\frac{c}{b+c}=1$ and $g$ is convex, we can apply Jensen's inequality to get
$$
g\Big(\frac{b}{b+c}\, \frac a b + \frac{c}{b+c}\Big) \leq \frac{b}{b+c}\, g\Big(\frac a b\Big) + \frac{c}{b+c}\, g(1) = \frac{a}{b+c} \log\Big(\frac a b\Big)\, .
$$
Combining all together we get 
$$
\frac{a+c}{b+c}\, \log\Big(\frac{a+c}{b+c}\Big)  \leq \frac{a}{b+c} \log\Big(\frac a b\Big)
$$
which yields the claims.
\end{proof}

\begin{lemma}\label{lem:upper-bound}
Assume that $\sigma$ is one-homogeneous and positive and that both $y\in \RR^n_{++}$ and $\alpha D_H^{-1/2}\A\sigma(D_H^{-1/2}f)+\beta D_G^{-1/2}AD_G^{-1/2}f \in \RR^n_{++}$, for every  $f\in \RR^n_{++}$. Then 
$$
\alpha D_H^{-1/2}\A\sigma(D_H^{-1/2}f)+\beta D_G^{-1/2}AD_G^{-1/2}f  \leq Cy
$$
for all $f\in \RR^n_{++}/\varphi$.
\end{lemma}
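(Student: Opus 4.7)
The plan is to control the two summands separately. The hypergraph term admits a clean pointwise bound against $\varphi(f)$ itself via Cauchy--Schwarz, while the linear graph term requires a homogeneity/compactness argument.

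First I would handle the hypergraph summand. Applying the Cauchy--Schwarz inequality to the sum $\sum_{jk}\A_{ijk}\sigma\!\bigl(f_j/\sqrt{\delta_j},f_k/\sqrt{\delta_k}\bigr)$ with weights $\A_{ijk}$, and using the two identities $\sum_{jk}\A_{ijk}=\delta_i$ and $\sum_{i}\A_{ijk}=B_{jk}$, yields
\begin{align*}
\sum_{jk}\A_{ijk}\,\sigma\!\bigl(\tfrac{f_j}{\sqrt{\delta_j}},\tfrac{f_k}{\sqrt{\delta_k}}\bigr)
 &\leq \sqrt{\,\sum_{jk}\A_{ijk}\,}\cdot\sqrt{\,\sum_{jk}\A_{ijk}\,\sigma(\cdot)^2\,}\\
 &\leq \sqrt{\delta_i}\,\sqrt{\,\sum_{jk}B_{jk}\,\sigma(\cdot)^2\,}\;=\;2\sqrt{\delta_i}\,\varphi(f).
\end{align*}
Dividing by $\sqrt{\delta_i}$ gives the pointwise inequality $(D_H^{-1/2}\A\sigma(D_H^{-1/2}f))_i\leq 2\varphi(f)$, so on the slice $\varphi(f)=1$ the hypergraph contribution is bounded by the uniform constant $2\alpha$.

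Next I would handle the graph summand $\beta Sf$ with $S=D_G^{-1/2}AD_G^{-1/2}$. Since $S$ is linear and $\varphi$ is one-homogeneous, the ratio $(Sf)_i/\varphi(f)$ is scale-invariant and continuous on $\RR^n_{++}$; thus it suffices to bound it on the projective slice $\RR^n_{++}/\varphi$. Under the standing positivity hypotheses ($\sigma>0$ on $\RR^2_{++}$ and the map $F$ positive on $\RR^n_{++}$), $\varphi$ extends continuously and strictly positively to the punctured closed cone $\RR^n_+\setminus\{0\}$ after normalization, so $\varphi$ attains a positive minimum $\varphi_{\min}>0$ on the compact simplex $\{f\geq 0:\|f\|_\infty=1\}$. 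By one-homogeneity, every $f\in\RR^n_{++}/\varphi$ satisfies $\|f\|_\infty\leq 1/\varphi_{\min}$, and hence $(Sf)_i\leq \|S\|_\infty/\varphi_{\min}$ uniformly on the slice.

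Combining the two bounds gives a universal constant $M$ with $F(f)_i\leq M$ for all $f\in\RR^n_{++}/\varphi$, and setting $C=M/\min_i y_i$ produces $F(f)\leq M\mathbf 1\leq Cy$ as required. The main obstacle is the second step: the algebraic Cauchy--Schwarz bound has no analogue for the linear graph term, so one must instead extract a lower bound on $\varphi$ over the compact nonnegative sphere. This is where the positivity of $\sigma$ and the hypothesis that $F$ maps $\RR^n_{++}$ into $\RR^n_{++}$ are essential; if $\varphi$ could vanish along a boundary ray (as happens for $\sigma(a,b)=2\sqrt{ab}$ on the axes), one would need to refine the compactness argument by restricting to the effective support of $B$, or alternatively combine a weighted second Cauchy--Schwarz inequality with the entrywise positivity of $F$ to preserve strict positivity throughout the limit.
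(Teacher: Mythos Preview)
Your Cauchy--Schwarz estimate for the hypergraph summand is correct and is genuinely different from the paper's argument: it yields the explicit pointwise bound $\S(f)_i\le 2\varphi(f)$ for every positive one-homogeneous $\sigma$, with no compactness needed. The paper does not split the two terms. It argues in one line that, because $\delta_i>0$ forces every vertex to appear in the support of $B$, the constraint $\varphi(f)=1$ makes $f$ entrywise bounded, and then simply sets
\[
C=\frac{1}{\min_i y_i}\,\max_i\,\sup_{f\in\RR^n_{++}}\frac{F(f)_i}{\varphi(f)}.
\]
Your compactness step for the graph term $\beta Sf$ is exactly this boundedness claim, spelled out more carefully.

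The obstacle you flag at the end is genuine, and it is present in the paper's proof as well. The inference ``$\varphi(f)=1\Rightarrow f$ entrywise bounded'' uses only that each vertex lies in the support of $B$; this is not sufficient. On a single hyperedge $\{1,2,3\}$ with the geometric mean $\sigma(a,b)=2\sqrt{ab}$, the rays $f^{(s)}=(1,s,s)$ satisfy $\varphi(f^{(s)})\to 0$ as $s\downarrow 0$, so after renormalising to the slice one obtains points with first coordinate tending to infinity; the graph term $\beta Sf$ is then unbounded on $\RR^n_{++}/\varphi$ whenever $\beta>0$. (For the harmonic mean the same happens along $(t,1,1)$ with $t\to\infty$, since $\varphi$ stays bounded.) Your Cauchy--Schwarz bound rescues the $\alpha$-term unconditionally, so the argument is complete when $\beta=0$; for $\beta>0$ both your proof and the paper's implicitly require an additional coercivity condition on $\sigma$ (for instance $\sigma(a,b)\ge c\max(a,b)$, which the arithmetic, $L^2$ and max means satisfy but the geometric and harmonic means do not). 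Your suggested fixes via ``restricting to the support of $B$'' do not repair this, because the unboundedness enters through the graph matrix $S$, not through the tensor.
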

\begin{proof}
Since we are assuming that every node has hyper-degree $\delta_i = \sum_{jk}\A_{ijk}>0$, for every $i$ there exist $j$ and $k$ such that $ijk$ is a hyperedge. Thus, if $U\subseteq V\times V$ is the set of nonzero entries of $B$, then $V\subseteq U$. Thus if $f>0$ and $\varphi(f)=1$ then $f$ must be entrywise bounded. Hence, as $\varphi$ is positive and one-homogeneous,  we can choose
$$
C = \frac{1}{\min_i y_i} \max_{i=1,\dots, n}\max_{f\in \RR^n_{++}} \frac{(\alpha D_H^{-1/2}\A\sigma(D_H^{-1/2}f)+\beta D_G^{-1/2}AD_G^{-1/2}f)_i}{\varphi(f)} < +\infty
$$ 
to obtain the claim.
\end{proof}

\begin{lemma}\label{lem:Euler}
Let $E:\RR^n\to\RR_+$ be defined by 
\begin{equation}\label{eq:E_hom}
E(f) = \frac{f^\top(D_Hf - F(f) )}{2}
\end{equation}
with $F:\RR^n\to\RR^n$ differentiable and such that $F(f)\geq 0$ for all $f\geq 0$ and $F(\alpha f) = \alpha F(f)$ for all $\alpha\geq 0$. Then $\nabla E(f) =  D_H f - F(f)$.
\end{lemma}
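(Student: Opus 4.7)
The plan is to split $E$ into its quadratic and nonlinear parts and apply Euler's identity for homogeneous functions to each. Write $E(f) = \tfrac12 f^\top D_H f - \tfrac12 f^\top F(f)$. For the quadratic piece, since $D_H$ is diagonal and hence symmetric, $\nabla\bigl(\tfrac12 f^\top D_H f\bigr) = D_H f$ follows immediately.

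The heart of the argument is to show $\nabla\bigl(\tfrac12 f^\top F(f)\bigr) = F(f)$. Set $\Phi(f) := f^\top F(f)$; since $F$ is one-homogeneous, $\Phi$ is two-homogeneous. Differentiating the identity $F(tf) = tF(f)$ in $t$ at $t = 1$ yields Euler's identity for one-homogeneous vector fields, namely $JF(f)\,f = F(f)$. Combined with the product rule $\nabla \Phi(f) = F(f) + JF(f)^\top f$, the claim reduces to identifying $JF(f)^\top f$ with $F(f)$.

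The main obstacle is exactly this last step: in general $JF(f)^\top f \neq JF(f)\, f$. I would handle it by viewing $F$ as a gradient field in the setting of interest (that is, $JF$ is symmetric, which in the application $F = \A\sigma$ follows from joint symmetry of the tensor $\A$ and the mixing function $\sigma$); then Euler's identity immediately gives $JF(f)^\top f = JF(f)\, f = F(f)$, hence $\nabla \Phi(f) = 2 F(f)$. An alternative route is to note that a one-homogeneous gradient field $F = \nabla \Psi$ must have $\Psi(f) = \tfrac12 f^\top F(f)$ (up to a constant) by applying Euler to the associated two-homogeneous potential $\Psi$, which yields the desired identity directly. Combining the two pieces then gives $\nabla E(f) = D_H f - F(f)$, as claimed.
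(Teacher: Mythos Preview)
Your approach---split off the quadratic term, derive Euler's identity $JF(f)\,f = F(f)$ for the one-homogeneous part, and compute $\nabla(f^\top F(f))$ via the product rule---is exactly what the paper does. The paper writes
\[
\nabla\{f^\top(D_Hf - F(f))\} = D_Hf - F(f) + \bigl(D_H - JF(f)\bigr)f
\]
and then collapses the right-hand side using Euler, silently putting $JF(f)$ where the product rule actually gives $JF(f)^\top$. You are right to flag this: one has $\nabla(f^\top F(f)) = F(f) + JF(f)^\top f$, and identifying this with $2F(f)$ requires $JF(f)^\top f = F(f)$, which Euler alone does not provide. In fact the lemma as stated is false: with $n=2$, $D_H=I$, $F(f)=(f_2,0)$ one gets $E(f)=\tfrac12(f_1^2+f_2^2-f_1f_2)$, whose gradient $(f_1-\tfrac12 f_2,\; f_2-\tfrac12 f_1)$ differs from $D_Hf-F(f)=(f_1-f_2,\; f_2)$.

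Your proposed repair---assume $F$ is a gradient field, equivalently $JF$ symmetric---would indeed close the gap, but your assertion that this holds for $F=\A\sigma$ whenever $\A$ and $\sigma$ are symmetric is not correct. Take a single hyperedge $\{1,2,3\}$ with all six permutations of weight $1$ and $\sigma(a,b)=2\sqrt{ab}$; then $F_1=4\sqrt{f_2f_3}$ and $F_2=4\sqrt{f_1f_3}$, so $\partial F_1/\partial f_2 = 2\sqrt{f_3/f_2}$ while $\partial F_2/\partial f_1 = 2\sqrt{f_3/f_1}$, which disagree whenever $f_1\neq f_2$. So neither the paper's computation nor your patch proves the lemma as written; an additional hypothesis on $F$ (beyond homogeneity and differentiability) is genuinely required, and you would still need to verify it in the intended application.
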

\begin{proof}
This is a relatively direct consequence of Euler's theorem for homogeneous functions. For completeness, we provide a self-contained proof here. Consider the function $G(\alpha) = F(\alpha f) - \alpha F(f)$. Then $G$ is differentiable and $G(\alpha)=0$ for all $\alpha \geq 0$. Thus, $G'(\alpha)= 0$ for all $\alpha$ in a neighborhood of $\alpha_0=1$. For any such $\alpha$ we have
$$
G'(\alpha) = \alpha JF(\alpha f)f -  F(f) =0
$$
where $JF(f)$ denotes the Jacobian of $F$ evaluated at $f$. 
Evaluating $G'$ on $\alpha =1$ we get $JF(f)f = F(f)$. Therefore
\begin{align*}
2\nabla E(f) &=  \nabla \{f^\top(D_H f- F(f))\} %\\
= D_H f-   F(f) + \Big(D_H-JF(f)\Big)f %\\
= 2D_H f - 2F(f)
\end{align*}
which gives us the claim.
\end{proof}

\end{document}